\documentclass{article}

\usepackage{times}
\usepackage{latexsym}
\usepackage{microtype}
\usepackage{amsfonts}
\usepackage{amsthm}
\usepackage{amsmath}
\usepackage{amssymb}
\usepackage{mathtools}
\usepackage{graphicx}
\usepackage{subfigure}
\usepackage{subcaption}
\usepackage{booktabs}
\usepackage{multirow}
\usepackage{url}
\usepackage{hyperref}

\usepackage[accepted]{icml2026}

\usepackage[capitalize,noabbrev]{cleveref}

\DeclareMathOperator*{\argmin}{arg\,min}
\DeclareMathOperator*{\argmax}{arg\,max}

\DeclareMathOperator{\Law}{Law}
\DeclareMathOperator{\supp}{supp}
\DeclareMathOperator{\ide}{I}

\theoremstyle{plain}
\newtheorem{theorem}{Theorem}[section]
\newtheorem{statement}[theorem]{Statement}
\newtheorem{lemma}[theorem]{Lemma}

\theoremstyle{definition}

\theoremstyle{remark}

\icmltitlerunning{Optimality of FSQ tokens for continuous diffusion for categorical data with application to text-to-speech}

\begin{document}

\twocolumn[
  \icmltitle{Optimality of FSQ Tokens for Continuous Diffusion for Categorical Data \\
  with Application to Text-to-Speech}

  \begin{icmlauthorlist}
    \icmlauthor{Vadim Popov}{huawei,hse}
    \icmlauthor{Wenju Gu}{huawei}
    \icmlauthor{Tasnima Sadekova}{huawei,hse}
    \icmlauthor{Georgii Aparin}{huawei,misis}
    \icmlauthor{Assel Yermekova}{huawei}
  \end{icmlauthorlist}

  \icmlaffiliation{huawei}{Huawei Noah's Ark Lab}
  \icmlaffiliation{hse}{National Research University Higher School of Economics}
  \icmlaffiliation{misis}{National University of Science and Technology MISIS}

  \icmlcorrespondingauthor{Vadim Popov}{popov.vadim1@huawei.com}

  \icmlkeywords{Diffusion models, stochastic differential equations, text-to-speech, speech synthesis, speech tokens, finite scalar quantization}

  \vskip 0.3in
]

\printAffiliationsAndNotice{}

\begin{abstract}
Continuous diffusion for categorical data is a framework belonging to the diffusion family and aiming at generating discrete data. The scientific interest to such models has been constantly increasing these days because researchers try to achieve a challenging goal of finding reasonable alternatives to autoregressive large language models. In this paper, we study the properties of the structure of the latent space corresponding to discrete tokens expressed in terms of Kullback-Leibler divergence on diffusion path measures and accuracy of the correct token prediction by the optimally trained diffusion model. We find that FSQ tokenization scheme has the latent space structure with the properties that make it best suited for continuous diffusion for categorical data as verified through rigorous theoretical analysis and numerical experiments. To validate our findings in real-life scenario, we train several text-to-speech diffusion models having speech tokens as intermediate acoustic features, and show that the one based on FSQ tokens indeed performs the best, and, moreover, it outperforms its strong LLM-based counterpart, at the same time being significantly smaller and faster.
\end{abstract}

\section{Introduction}
\label{sec:intro}
Diffusion probabilistic modeling \cite{ddpm, song-main} is a powerful framework initially developed for generating data belonging to continuous domains. It has demonstrated perfect results in many tasks such as image \cite{dpms-beat-gans,stable-diffusion,image-superres}, video \cite{videofusion}, speech \cite{wavegrad,voicebox} and music \cite{music-1,music-2} generation. Since the moment diffusion probabilistic models (DPMs) started to achieve state-of-the-art results in continuous domains, there have been numerous attempts to adapt this framework to discrete domains, most challenging of which is text. While the first such attempts \cite{argmax-flow} achieved modest results and had limited applicability to real-life scenarios, the latest findings made it possible to develop discrete diffusion models performing on par with some of Large Language Models \cite{llm}, or LLMs for short, of moderate size in common NLP tasks \cite{plaid,discrete-dpm,llada} as well as in some specific areas like code generation \cite{diff-coder}.

The most successful diffusion models aiming at text generation typically have forward and reverse diffusion processes operating in discrete domains, and neural networks parameterizing reverse process in such models also predict probabilities of discrete entities (text tokens). For instance, in absorbing discrete DPM \cite{radd} forward and reverse diffusions correspond to, respectively, masking and unmasking a sequence of tokens, whereas neural network predicts fully unmasked sequence given its masked version. However, there exists a hybrid approach combining features of common discrete and continuous diffusion models. In such a hybrid framework called Continuous Diffusion for Categorical Data (CDCD), neural network predicts probabilities of discrete tokens, while both forward and reverse diffusion processes operate in continuous latent space containing embeddings of data tokens \cite{cdcd}. The framework of CDCD models, however, has received relatively less attention compared to, for example, absorbing discrete diffusions, and one of the reasons is that the structure of the continuous latent space is typically optimized jointly with the diffusion model leading to unstable training \cite{cdcd,empowering-for-text-generation,vq-lcmd} and, perhaps, suboptimal models. In this paper, we study properties of the structure of the latent space making it well-suited for CDCD models. We connect relative location of token embeddings with Kullback-Leibler (KL) divergence between measures associated with diffusion paths drifting towards these latent vectors. KL divergence between path measures is a meaningful metrics -- some variant of it can be shown to be optimized during the process of training continuous DPMs \cite{ml-training}.

Recently, there have been many efforts to obtain efficient discrete representation of objects belonging to modalities other than text. It can be explained by the increasing interest in the so-called omni-LLMs \cite{any-gpt,qwen3-omni}, i.e. LLMs capable of handling multiple modalities (e.g. text, audio, images, video, etc.) in a unified way by means of tokenization. Thus, efficient discretization techniques, e.g. various types of Vector Quantization (VQ), are essential for omni-LLMs to work well. In our paper we demonstrate that a particular type of VQ, namely, Finite Scalar Quantization \cite{fsq}, or FSQ for short, leads to codebooks in the latent space with some optimality properties expressed in terms of KL divergence between diffusion path measures. Moreover, we set up the hypothesis that a CDCD model trained till optimality has the best prediction accuracy when the latent space structure is that of FSQ, prove this hypothesis in some particular cases and provide numerical experiments supporting it in the general case. Finally, we consider a realistic scenario of generating speech tokens by a CDCD model, and find out that, among several possible codebooks, the one used in FSQ in combination with the properly trained CDCD model indeed delivers the best quality of text-to-speech generation. Furthermore, the mentioned CDCD-based text-to-speech model outperforms its LLM-based counterpart in terms of speech intelligibility and zero-shot capabilities, and, notably, its CDCD backbone is around $10$x smaller and more than $5$x times faster than the LLM backbone of the competing algorithm.

Our main contributions are threefold: (i) for continuous diffusion processes, we theoretically establish the connection between forward diffusion parameters, distance between generated samples, and KL divergence between reverse diffusion paths leading to these samples; (ii) we formulate properties making FSQ codebooks best-suited for CDCD models and verify FSQ optimality both theoretically and through numerical experiments; (iii) we propose the novel CDCD-based text-to-speech model relying on FSQ speech tokens outperforming its LLM-based counterpart both in terms of synthesized speech quality and efficiency.

Our work has the following structure: we briefly overview related work in \cref{sec:related}; all the necessary formulae and algorithms related to the CDCD framework, FSQ scheme and text-to-speech models are provided in \cref{sec:background}; in \cref{sec:theory} we describe our theoretical findings which we support by numerical experiments whose details can be found in \cref{sec:exps} along with the description of the text-to-speech experiments. We conclude in \cref{sec:outro}. We will release training codes and the checkpoint of our best text-to-speech model at \url{https://github.com/li1jkdaw/CDCD-TTS}.

\section{Related Work}
\label{sec:related}

As mentioned earlier, CDCD models are not as popular as vanilla continuous or discrete DPMs. As for FSQ method of data quantization, although it has found many applications, its properties with regard to diffusion modeling have not been extensively studied either. In this section we will briefly mention relevant work on the two mentioned topics. On the contrary, text-to-speech synthesis is a rapidly evolving field, so we review only the most relevant models either relying on speech tokens, or having diffusion models or flow matching \cite{flow-matching} as their backbone.

\subsection{Diffusion Models for Categorical Data}
\label{subsec:related-cdcd}
With only a few exceptions \cite{vq-lcmd}, CDCD or similar models are trained for text generation. As for continuous modalities, it is more natural to use common continuous DPMs rather than discretize, say, images, and use discrete DPMs or CDCD models to generate discrete image tokens. CDCD \cite{cdcd} was one of the early attempts to combine features of discrete and continuous DPMs in a unified framework. It did it by keeping forward and reverse diffusions in continuous embedding space (we will refer to it as \textit{latent} space) while parameterizing reverse process with the neural network predicting token probabilities and trained with cross-entropy loss. Later, researchers tried to improve upon this baseline by fixing instability issues or slightly modifying CDCD framework. Gao et al. \yrcite{empowering-for-text-generation} suggest to add specific loss function to guarantee training stability, because optimizing pure diffusion loss in CDCD models may sometimes result in embedding collapse (note that in the original CDCD model token embeddings were learnable). Lovelace et al. \yrcite{lovelace} and Liu et al. \yrcite{eddpm} solve instability problem by relying on the pre-trained language models in building the latent space. Mahabadi et al. \yrcite{tess} propose to use diffusions defined on vocabulary probability simplex instead of token embedding space. Shabalin et al. \yrcite{vetrov} propose to build the latent space from token embeddings based on semantic similarity. To the best of our knowledge, our work is the first one to study the geometrical structure of the latent space for CDCD models in their original formulation.

\subsection{FSQ method}
\label{subsec:related-fsq}
Initially proposed for computer vision tasks \cite{fsq} and used for image and video tokenization \cite{vidtok,bsq}, FSQ method allowing to use very low-dimensional latent spaces by locating token embeddings ``uniformly'' in a hypercube found more applications in speech. Its applications to this domain include not only conventional speech coding \cite{fsq-codec} where it helps to achieve extremely low bit-rates, but also zero-shot text-to-speech synthesis where LLM-based backbone generates FSQ speech tokens from input text autoregressively \cite{fish-speech, cosyvoice2, cosyvoice3}. Furthermore, FSQ speech tokens can be used in speech LLMs \cite{llasa} capable of various other tasks, e.g. automatic speech recognition, speech continuation, speech-to-speech question answering. etc.

\subsection{Text-to-Speech}
\label{subsec:related-tts}
Modern text-to-speech (TTS) systems usually come with zero-shot capabilities (i.e. they can copy speaker's timbre, emotion and speaking style from a short reference audio) and typically consist of several modules one of them generating intermediate speech features (mel-spectrograms or speech tokens) from text by means of either LLM or diffusion/flow matching. As for the latter, one of the first hiqh-quality TTS models based on flow matching was VoiceBox \cite{voicebox}, further improved by getting rid of phoneme-level duration prediction in E2-TTS \cite{e2-tts} and employing modern architectures in F5-TTS \cite{f5-tts}. While these models, as well as most of those relying on diffusions or flow matching, generated mel-spectrograms, there were developed a very few models \cite{ns3} based on common discrete diffusions for generating speech tokens. In our paper, we start from one of the models generating intermediate speech tokens by means of LLM, namely CosyVoice2 \cite{cosyvoice2}, and replace its backbone with CDCD, thus introducing the first CDCD-based TTS model.

\section{Background}
\label{sec:background}
\subsection{Continuous Diffusion for Categorical Data}
\label{subsec:background-cdcd}
Suppose we have a vocabulary of size $V$ with token ids $k=1,..,V$ having corresponding $n$-dimensional embeddings $E=\{e_{k}\}_{k=1}^{V}$. Consider forward diffusion satisfying the following stochastic differential equation (SDE):
\begin{equation}
\label{eq:fwd_x}
dX_{t}=f(t)X_{t}dt + g(t)dW_{t}, \ \ t\in [0,T],
\end{equation}
where $T$ is some finite positive time horizon, $W_{t}$ is Wiener process and functions $f(t)$ and $g(t)$ satisfy certain measurability conditions so that the SDE~(\ref{eq:fwd_x}) has strong solution. In this paper we use It\^o's calculus \cite{oksendal} when we manipulate with SDEs and corresponding stochastic processes. Diffusion time $t$ stands for the corruption level with $t=0$ corresponding to uncorrupted latent vectors $X_{0}\in E$ whereas the final level $t=T$ corresponds to the prior (standard normal $\mathcal{N}(0,\ide)$ in our experiments), i.e. we assume that $T$, $f(t)$ and $g(t)$ are such that $\Law(X_{T})\approx\mathcal{N}(0,\ide)$ where $
\ide$ is $n\times n$ identity matrix. Note that in our theoretical analyses for the ease of exposition we always assume that our goal is to predict a single token given some conditioning $c$ rather than a sequence of tokens as it happens in most real-life scenarios.

It is a well-known fact \cite{vdpm} that the SDE~(\ref{eq:fwd_x}) allows for explicit solution
\begin{equation}
\label{eq:sol_x}
X_{t}=\alpha_{t}X_{0}+\sigma_{t}\mathcal{N}(0,\ide), \ \ t\in [0,T], 
\end{equation}
where coefficients $\alpha_{t}$ and $\sigma_{t}$ are such that $\alpha_{0}=1, \sigma_{0}=0$, and they are connected with functions $f(t)$ and $g(t)$ by the following rule:
\begin{equation}
\label{eq:alpha_sigma_t}
f(t) = \frac{\dot{\alpha_{t}}}{\alpha_{t}}, \ \ 
g^{2}(t)=2\sigma_{t}\left(\dot{\sigma_{t}}-\frac{\dot{\alpha_{t}}}{\alpha_{t}}\sigma_{t}\right) .
\end{equation}
CDCD model is trained to predict probabilities of tokens given their noisy latents $X_{t}$, diffusion time $t$ and conditioning $c$ by outputting logits and applying softmax function. So, neural network with parameters $\theta$ represents probability distribution $P_{\theta}(\cdot|X_{t},t,c)$ and is trained by minimizing the following negative log-likelihood loss:
\begin{equation}
\label{eq:diff_loss}
\mathcal{L}_{diff}(\theta)=\mathbb{E}_{(k,X_{t})\sim p_{0,t}(\cdot,\cdot|c)}[-\log{P_{\theta}(k|X_{t},t,c)}] ,
\end{equation}
where diffusion time $t$ is sampled from uniform distribution $U[0,T]$ while ground truth label $k$ and corresponding noisy latent vector $X_{t}$ -- from the joint distribution with the probability density function $p_{0,t}(\cdot,\cdot|c)$. In practice, we first sample ground truth label $k$ and then compute $X_{t}$ by the formula~(\ref{eq:sol_x}) where we replace $X_{0}$ with $k$-th token embedding $e_{k}$. By Bayes formula we can rewrite the above expression in terms of expectations with respect to the density function of the noisy latents $p_{t}(\cdot|c)$ and conditional probabilities $P_{0|t}(\cdot|X_{t},c)$ of tokens given their noisy latent vectors:
\begin{equation}
\nonumber
\mathcal{L}_{diff}(\theta)=\mathbb{E}_{X_{t}\sim p_{t}(\cdot|c)}\mathbb{E}_{k\sim P_{0|t}(\cdot|X_{t},c)}[-\log{P_{\theta}(k|X_{t},t,c)}]
\end{equation}
It follows from this formula that the parameters $\theta^{*}$ minimizing CDCD diffusion loss $\mathcal{L}_{diff}$ are such that
\begin{equation}
\label{eq:theta_star}
P_{\theta^{*}}(k|x,t,c)=P_{0|t}(k|x,c)
\end{equation}
for all $k=1,..,V$, $t\in(0,T]$ and $x\in\supp{p_{t}}=\mathbb{R}^{n}$.

The above expression implies that once we have a well-trained neural network, we can estimate clean latent vector $X_{0}$ given its noisy version $X_{t}$:
\begin{equation}
\label{eq:x0_estim}
X_{\theta}(X_{t},t,c)=\sum_{k=1}^{V}P_{\theta}(k|X_{t},t,c)e_{k},
\end{equation}
and it follows from (\ref{eq:theta_star}) that $X_{\theta^{*}}(X_{t},t,c)=\mathbb{E}[X_{0}|X_{t},c]$. 

Generation with the trained CDCD model is possible in two different ways by solving either SDE or ordinary differential equation (ODE). In the remainder of the paper we will omit conditioning $c$ for brevity. In particular, denote probability density function of noisy latents $X_{t}$ by $p_{t}(\cdot)$ and write down the mentioned SDE and ODE:
\begin{equation}
\label{eq:rev_x}
d\hat{X}_{t}=(f(t)\hat{X}_{t}-g^{2}(t)\nabla\log{p_{t}(\hat{X}_{t})})dt + g(t)d\hat{W}_{t},
\end{equation}
\begin{equation}
\label{eq:ode_x}
dX_{t}=(f(t)X_{t}-\frac{1}{2}g^{2}(t)\nabla\log{p_{t}(X_{t})})dt,
\end{equation}
where $\hat{W}_{t}$ is a reverse-time Wiener process.
Song et al. \yrcite{song-main} showed that, given proper initial conditions, forward Kolmogorov equations on density functions of processes satisfying the SDE~(\ref{eq:fwd_x}) and the ODE~(\ref{eq:ode_x}) are the same, and used the result obtained by Anderson \yrcite{anderson} to show that the SDE~(\ref{eq:rev_x}) describes reverse-time dynamics of the forward process $X_{t}$ satisfying the SDE~(\ref{eq:fwd_x}). Thus, sampling discrete tokens is enabled by solving either the SDE~(\ref{eq:rev_x}) or the ODE~(\ref{eq:ode_x}) backwards in time from $t=T$ to $t=0$ with the initial condition at $t=T$ being a random sample from $\mathcal{N}(0,\ide)$. If the resulting solution is $x_{0}$, then we generate token with id $\hat{k}=\argmin_{k=1,..,V}\Vert x_{0}-e_{k}\Vert_{2}^{2}$. The score function $\nabla\log{p_{t}}(x)$ is approximated by a function $s_{\theta}(x,t)$ expressed in terms of $X_{\theta}(x,t)$ by the formula
\begin{equation}
\label{eq:score_theta}
s_{\theta}(x,t)=-\frac{1}{\sigma^{2}_{t}}\left(x-\alpha_{t}X_{\theta}(x,t)\right).
\end{equation}
For the optimally trained neural network with parameters $\theta^{*}$ the above expression equals the true score function since
\begin{equation}
\label{eq:score_true}
\nabla\log{p_{t}}(x)=-\frac{1}{\sigma^{2}_{t}}\left(x-\alpha_{t}\mathbb{E}[X_{0}|X_{t}=x]\right),
\end{equation}
as known from the literature (e.g. see \cite{vdpm}).
\subsection{Finite Scalar Quantization}
\label{subsec:background-fsq}
The Finite Scalar Quantization \cite{fsq} algorithm contains the following encoding-decoding steps:
\begin{itemize}
\item Apply encoder \textit{Enc} to input data $x$ and a bounded element-wise non-linearity $h$ after: $z=h(Enc(x))$.
\item Round each of $n$ coordinates of the result to the nearest integer and apply decoder \textit{Dec} to get reconstructed data $\hat{x}$: $\hat{x}=Dec(round(z))$
\end{itemize}
If outputs of the function $h:\mathbb{R}^{n}\to\mathbb{R}^{n}$ are bounded by $b\in\mathbb{Z}_{+}$ in absolute value, then each of $n$ coordinates can take up to $2b+1$ values, resulting in the overall codebook size of $(2b+1)^{n}$ tokens. In general, we can apply different bounded functions to different coordinates of the latent variable $z$, as well as employ asymmetric bounded non-linearities. The key feature of FSQ codebook is that its token embeddings (encoded data after rounding operation) are relatively low-dimensional vectors whose $i$-th coordinate can take any integer value between $a_{i}$ and $b_{i}$ where $i=1,..,n$ and $a_{i},b_{i}\in\mathbb{Z}$. It means that embeddings are located ``uniformly'' in the volume bounded by hyperplanes $z_{i}=a_{i}$ and $z_{i}=b_{i}$ for all $i=1,..,n$.

In this paper, we consider two basic FSQ cases: $a_{i}=-1, b_{i}=1$, and $a_{i}=0,b_{i}=1$ for all $i=1,..,n$. We will refer to them as base$3$ and base$2$ FSQ respectively because of the number of different values each of $n$ coordinates of embeddings can take. For the base$2$ case, we linearly re-normalize FSQ latent space so that embedding coordinates are either $-1$ or $1$ (apply $z\mapsto2z-1$ after bounded non-linearity $h$ and replace rounding operation with sign function), because diffusion models usually operate on mean-normalized data. Thus, for base$2$ and base$3$ FSQ with dimension $n$ the codebook size is $2^{n}$ and $3^{n}$ respectively, and $L_{\infty}$ norm of each token embedding $e_{k}\in E$ is bounded by $1$. Figure~\ref{fig:fsq} (a-b) illustrates the codebooks $E$ of base$2$ and base$3$ FSQ schemes with dimension $n=2$.

\subsection{CosyVoice2}
\label{subsec:background-cosyvoice2}
CosyVoice2 \cite{cosyvoice2} is a text-to-speech model consisting of three main modules: (i) text-to-token module represented by LLM initialized with the pre-trained Qwen2.5-0.5B \cite{qwen2.5} trained to predict a sequence of speech tokens from an input text sequence; (ii) token-to-mel module based on flow matching; (iii) mel-to-wave module, i.e. vocoder. CosyVoice2 makes use of speech tokens obtained from encoder-decoder model with FSQ bottleneck trained on automatic speech recognition task rather than reconstruction. It means that the resulting FSQ tokens are such that it is easy to reconstruct linguistic content from them, but they are not guaranteed to contain other information such as speaker's timbre. Therefore, information about target speaker (CAM++ speaker embedding \cite{cam++}) is fed to the second module converting a sequence of speech tokens to mel-spectrogram. As for the third module, CosyVoice2 uses HiFi-GAN vocoder \cite{hifi-gan} to convert mel-spectrogram to waveform.

CosyVoice2 is a TTS model capable of zero-shot voice cloning, i.e. it synthesizes speech with the same voice and in the same style as the reference speech given to it as input along with the text to synthesize. The first LLM-based module enables zero-shot mechanism by in-context learning as in VALLE-like TTS models \cite{valle-x}: before text we need to synthesize, we feed the text from the reference audio and the sequence of speech tokens corresponding to it to LLM so that it fits to the reference speaking style. As for conditioning the second flow matching module on reference audio, it is done similarly to VoiceBox \cite{voicebox} where flow matching generates mel-spectrogram of the speech we need to synthesize and that of the reference speech jointly, which can be seen as mel-spectrogram ``outpainting''. One can refer to Figure~\ref{fig:cdcd-tts} for illustration of CosyVoice2 general architecture and inference scheme.

\section{Optimality Properties of FSQ Method}
\label{sec:theory}

Throughout this section we assume that differential equations (\ref{eq:fwd_x}), (\ref{eq:rev_x}) and (\ref{eq:ode_x}), as well as their variants with the score function replaced with its neural network approximation $s_{\theta}(x,t)$ defined in~(\ref{eq:score_theta}), have solutions on $[0,T]$, which requires certain measurability and Lipschitz assumptions on drift and diffusion coefficients. Moreover, following Anderson \yrcite{anderson}, we assume that forward Kolmogorov equations on probability density functions of all the considered SDEs have unique smooth solutions. We also assume $\alpha_{t}>0$ and $\sigma_{t}>0$ for all $t\in(0,T]$.

\subsection{Diffusion Path Measures}
\label{subsec:theory-paths}

First let us establish the useful result regarding reverse-time diffusion bridges induced by the forward diffusion process $X_{t}$ defined by the SDE~(\ref{eq:fwd_x}).

\begin{lemma}
  \label{lem:diff_bridge}
  Consider $n$-dimensional stochastic process $X_{t}$ defined by the SDE~(\ref{eq:fwd_x}) and condition it on some value $a\in\mathbb{R}^{n}$ at $t=0$ and value $b\in\mathbb{R}^{n}$ at $t=T$. Denote the resulting diffusion bridge by $X_{t}^{a,b}$, i.e. $\Law{(X_{t}^{a,b})}=\Law{(X_{t}|X_{0}=a,X_{T}=b)}$ for $t\in[0,T]$. Then this diffusion bridge has reverse-time model $\hat{X}_{t}^{a,b}$ satisfying the following SDE:
  \begin{equation}
  \label{eq:bridge-lemma}
  \begin{split}
  d\hat{X}_{t}^{a,b}=\left(\left(f(t)+\frac{g^{2}(t)}{\sigma_{t}^{2}}\right)\hat{X}_{t}^{a,b} \right. & \left.-g^{2}(t)\frac{\alpha_{t}}{\sigma^{2}_{t}}a\right)dt \\ 
  & +g(t)d\hat{W}_{t},
  \end{split}
  \end{equation}
  for $t\in[0,T]$ with initial condition $\hat{X}_{T}^{a,b}=b$. Reverse-time $n$-dimensional Wiener process is denoted here by $\hat{W}_{t}$ and coefficients $\alpha_{t}$ and $\sigma_{t}$ are given in~(\ref{eq:alpha_sigma_t}).
\end{lemma}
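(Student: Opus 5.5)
The plan is to realize the bridge in two stages and then apply the reverse-time result of Anderson \yrcite{anderson} in the form (\ref{eq:rev_x}). Conditioning the forward process (\ref{eq:fwd_x}) on $X_{0}=a$ leaves it a diffusion with the same drift and diffusion coefficients. Its marginals are Gaussian by (\ref{eq:sol_x}): $p_{t}(x|a)=\mathcal{N}(x;\alpha_{t}a,\sigma_{t}^{2}\ide)$. So this process, started deterministically at $a$, satisfies all the assumptions of this section with the smooth density $p_{t}(\cdot|a)$ for $t\in(0,T]$. By (\ref{eq:rev_x}), its reverse-time dynamics are
\begin{equation}
\nonumber
d\hat{X}_{t}=\left(f(t)\hat{X}_{t}-g^{2}(t)\nabla\log p_{t}(\hat{X}_{t}|a)\right)dt+g(t)d\hat{W}_{t}.
\end{equation}
The conditional score is explicit: $\nabla\log p_{t}(x|a)=-(x-\alpha_{t}a)/\sigma_{t}^{2}$, which is (\ref{eq:score_true}) with $\mathbb{E}[X_{0}|X_{t}=x]\equiv a$. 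Substituting it gives exactly the drift $\bigl(f(t)+g^{2}(t)/\sigma_{t}^{2}\bigr)x-g^{2}(t)\alpha_{t}a/\sigma_{t}^{2}$ of (\ref{eq:bridge-lemma}).

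The second stage handles the additional conditioning $X_{T}=b$. The reverse-time SDE describes the law of the whole path of the process conditioned on $X_{0}=a$, with initial law $\mathcal{N}(\alpha_{T}a,\sigma_{T}^{2}\ide)$ at time $T$. Since the reverse dynamics are Markov, conditioning this path law on the value at time $T$ amounts to replacing the random initial condition by the deterministic value $b$, while keeping the same transition kernels for $t<T$. To make this rigorous, I would disintegrate the path measure $\Law(X_{[0,T]}|X_{0}=a)$ with respect to the terminal coordinate. By the Markov property of the reversed process, the conditional law given $\hat{X}_{T}=b$ is the law of the SDE solution started at $b$, for almost every $b$. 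Continuity of that solution in its initial point then extends the statement to every $b\in\mathbb{R}^{n}$. This yields $\Law(X_{t}|X_{0}=a,X_{T}=b)$ for all $t$, which is the claim.

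The main obstacle I expect is the endpoint $t=0$. There the drift coefficient $g^{2}(t)/\sigma_{t}^{2}$ blows up, so the standard Lipschitz conditions fail, and Anderson's theorem is stated for smooth densities, whereas $p_{0}(\cdot|a)$ is a Dirac mass. I would first prove the statement on $[\varepsilon,T]$, where everything is regular. I would then check that the explicit linear SDE (\ref{eq:bridge-lemma}) has a solution with Gaussian marginals that converge to $\delta_{a}$ as $t\to0$. A direct check is to solve the linear SDE by variation of constants and compare its mean and covariance with those of the Gaussian bridge $X_{t}|X_{0}=a,X_{T}=b$. That Gaussian bridge has mean $\alpha_{t}a+\frac{\sigma_{t}^{2}\alpha_{T}/\alpha_{t}}{\sigma_{T}^{2}}\bigl(b-\alpha_{T}a\bigr)$, up to the standard covariance formula. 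This comparison gives an independent confirmation and fixes the behaviour at $t=0$, under the standing assumption of the section that solutions exist on $[0,T]$.
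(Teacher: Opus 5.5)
Your argument is correct, but it is organized differently from the paper's. The paper conditions on both endpoints in forward time first. It applies Doob's $h$-transform with $h^{a,b}(x,t)=p_{T|t}(b|x)/p_{T|0}(b|a)$ to get the forward bridge SDE (\ref{eq:bridge-fwd}), whose drift depends on $b$. It then computes the Gaussian marginals of $X_t^{a,b}$, and hence a bridge score depending on both $a$ and $b$, and applies Anderson's theorem to the bridge itself. The $b$-terms cancel, and the identity $\sigma_{T|t}^{2}=\sigma_{T}^{2}-\alpha_{T|t}^{2}\sigma_{t}^{2}$ reduces the coefficient of $x$ to $f(t)+g^{2}(t)/\sigma_{t}^{2}$.

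You instead reverse first, under the trivial conditioning $X_0=a$ only. The score is then simply $-(x-\alpha_t a)/\sigma_t^2$, and the drift of (\ref{eq:bridge-lemma}) appears in one line. You get the conditioning on $X_T=b$ from the fact that $X_T$ is the initial value of the reversed process and is independent of the reverse-time Wiener process.

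Your route is shorter, and it explains structurally why the reverse drift carries no dependence on $b$; in the paper this emerges only from an algebraic cancellation. It also never meets the singularity of the $h$-transform drift at $t=T$, where $\sigma_{T|t}\to0$. What it costs is the measure-theoretic step, which needs three things:
\begin{itemize}
\item independence of $X_T$ from $\hat{W}$, which is part of the standard form of Anderson's result;
\item strong uniqueness on $[\varepsilon,T]$, to write the path as a measurable function of $(X_T,\hat{W})$;
\item continuity in $b$, to pass from almost every $b$ to every $b$.
\end{itemize}
The paper's $h$-transform works pointwise in $b$, and it also produces the forward bridge SDE and the bridge marginals as by-products.

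The paper simply absorbs the blow-up of $g^{2}(t)/\sigma_{t}^{2}$ at $t=0$ into its standing assumptions. Your plan to work on $[\varepsilon,T]$ and check the explicit Gaussian solution against the bridge mean (which you state correctly) and covariance is more careful than what the paper does.
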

Note that this lemma (as well as the statement we will formulate later in this section) does not rely on the fact that data distribution $\Law{(X_{0}})$ is discrete, thus it can be useful not only when CDCD models are considered, but for common continuous DPMs as well. The proof of this lemma can be found in Appendix~\ref{app:lemma} and consists in applying Doob's transform \cite{doob-book} to write down SDE for the diffusion bridge $X_{t}^{a,b}$ and Anderson's theorem \yrcite{anderson} to derive its reverse-time dynamics $\hat{X}_{t}^{a,b}$.

In practice, this lemma can be useful if we want to compare trajectories of the reverse diffusion $\hat{X}_{t}$ satisfying the SDE~(\ref{eq:rev_x}) starting in the same point $\hat{X}_{T}=b$ and finishing at time $t=0$ in two different points $a_{1}$ and $a_{2}$, i.e. trajectories of the reverse-time diffusion bridges $\hat{X}_{t}^{a_{1},b}$ and $\hat{X}_{t}^{a_{2},b}$. In terms of generative modeling, when a diffusion model is trained well, it corresponds to comparing trajectories of this model starting from the same prior sample $b\sim\mathcal{N}(0,\ide)$ and generating two different data samples $a_{1}$ and $a_{2}$. The following statement formalizes this comparison by providing an expression in terms of KL divergence:
\begin{statement}
  \label{thm:theorem}
  Consider path measures defined by the distributions of continuous trajectories of the processes $\hat{X}^{a_{1},b}_{t}$ and $\hat{X}^{a_{2},b}_{t}$ for $t\in[\tau,T]$ for some $\tau\in(0,T)$ and denote them by $\mu_{1}$ and $\mu_{2}$ respectively. KL divergence between these measures is then calculated as
  \begin{equation}
  \label{eq:kl_path}
  KL(\mu_{1}||\mu_{2})=\Vert a_{1}-a_{2}\Vert_{2}^{2}\cdot\frac{1}{2}\int_{\tau}^{T}\frac{\alpha_{t}^{2}}{\sigma^{4}_{t}}g^{2}(t)dt,
  \end{equation}
  if the above integral is finite.
\end{statement}

From the generative modeling perspective, Statement~\ref{thm:theorem} establishes the connection between MSE distance between data samples $a_{1}$ and $a_{2}$ a well-trained diffusion model generates, functions $\alpha_{t},\sigma_{t}$ and $g(t)$ describing the forward diffusion~(\ref{eq:fwd_x}), and KL divergence between path measures defined by reverse diffusion trajectories finishing at data samples $a_{1}$ and $a_{2}$.

\begin{figure*}[ht]
\begin{center}
\center{\includegraphics[width=1.0\linewidth]{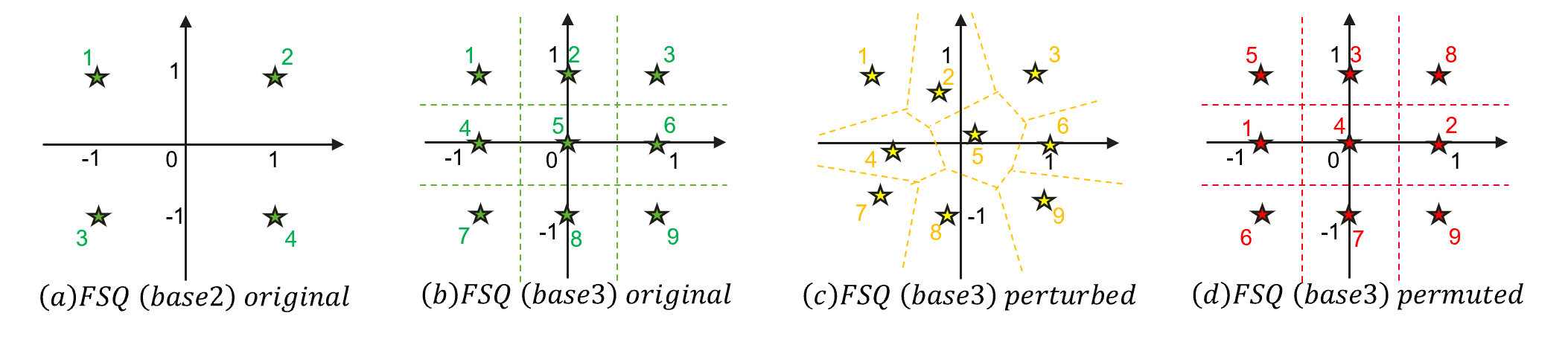}}
\end{center}
\caption{$2$-dimensional FSQ latent space structure. All points lie inside the square $|y_{1}|\leq1$, $|y_{2}|\leq1$. Colored numbers stand for token ids. Dotted lines are the bounds of areas $\Omega_{k}^{Y}$ for token ids $k=1,..,b^2$ where $b$ equals to the base, i.e. $2$ for (a) and $3$ for (b-d).}
\label{fig:fsq}
\end{figure*}

Statement~\ref{thm:theorem} is a consequence of Girsanov theorem \cite{oksendal} and its proof can be found in Appendix~\ref{app:theorem}. KL divergence between path measures related to reverse diffusion is in general an important metrics describing how close certain reverse diffusion trajectories are to each other in distribution. For example, Song et al. \yrcite{ml-training} showed that training continuous DPMs by means of optimizing standard continuous diffusion MSE loss is equivalent to minimizing KL divergence between path measures corresponding to reverse diffusions with the true score function and its neural network approximation. So, we suggest that $KL(\mu_{1}||\mu_{2})$ has a connection with the difficulty of training a diffusion model capable of generating, in particular, two data samples $a_{1}$ and $a_{2}$. The intuition is that the smaller this KL divergence is, the harder it is to train a model distinguishing well between generative trajectories leading to $a_{1}$ and $a_{2}$.

As far as FSQ latent space structure is concerned, it has some optimal properties in terms of relative position of token embeddings $e_{k}$, which we now can connect with KL divergence between reverse diffusion path measures and, hence, with the difficulty of CDCD training. Consider the following metrics $D(E)$ defined for a codebook $E=\{e_{k}\}_{k=1}^{V}$:
\begin{equation}
\label{eq:avgmin}
D(E):=\frac{1}{V}\sum_{k=1}^{V}{\min_{i\neq k}{\Vert e_{i}-e_{k}\Vert_{2}^{2}}}.
\end{equation}
One can see from the above definition that this is the \textit{average nearest neighbour distance}. Statement~\ref{thm:theorem} allows us to interpret maximizing this metrics as being conservative about how close in distribution diffusion trajectories corresponding to different tokens should be and trying to help neural network to train better in the ``worst-case'' scenario. The following theorem establishes local optimality of FSQ codebooks in terms of average nearest neighbour distance.

\begin{theorem}
  \label{thm:locally}
  In terms of average nearest neighbour distance $D$, the codebooks $E_{FSQ}=\{e_{k}\}_{k=1}^{V}$ of base2 and base3 FSQ methods are locally optimal in the class of all codebooks with $V$ entries such that $\Vert e_{k}\Vert_{\infty}\leq 1$ for all $k=1,..,V$, i.e. any sufficiently small perturbation of a vector from the codebook $E_{FSQ}$ leads to decreasing $D$.
\end{theorem}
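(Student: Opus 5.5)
The plan is a first-order (directional-derivative) analysis of $D$ around $E_{FSQ}$, followed by a homogeneity argument that absorbs the quadratic remainder. Write the perturbed codebook as $e_k+\delta_k$. Admissibility $\Vert e_k+\delta_k\Vert_\infty\leq 1$ forces $\delta_{k,j}e_{k,j}\leq 0$ for every coordinate $j$ with $|e_{k,j}|=1$. In the base$3$ case, coordinates with $e_{k,j}=0$ are unconstrained for small $\delta$.

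Let $d_0$ be the minimal distance: $d_0=4$ for base$2$, where neighbours differ by $\pm2$ in one coordinate, and $d_0=1$ for base$3$, where neighbours differ by $\pm1$ in one coordinate. Let $N(k)$ be the set of tokens at distance exactly $d_0$ from $e_k$. All other tokens are at squared distance at least $d_0+c$ for some fixed gap $c>0$. So for $\Vert\delta\Vert$ small enough, the minimum in~(\ref{eq:avgmin}) is attained inside $N(k)$, and
\begin{equation}
\nonumber
D(E+\delta)-D(E)=\frac{1}{V}\sum_{k=1}^{V}\min_{i\in N(k)}2\langle e_i-e_k,\delta_i-\delta_k\rangle+O(\Vert\delta\Vert^2).
\end{equation}
Denote the first sum by $\Phi(\delta)$. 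It is continuous, piecewise linear and positively homogeneous of degree one. The key claim is that $\Phi(\delta)\leq -c'\Vert\delta\Vert$ for some $c'>0$ on the admissible cone. Granting this, the linear term dominates the quadratic remainder, which gives strict decrease of $D$. By homogeneity and compactness of the admissible part of the unit sphere, it suffices to show $\Phi(\delta)<0$ for every admissible $\delta\neq0$.

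\textbf{Base$2$.} Here I bound each minimum by the uniform average over the $n$ neighbours. I then swap the order of summation, using $\sum_{k}\sum_{i\in N(k)}\langle e_i-e_k,\delta_i\rangle=\sum_i\langle\delta_i,\sum_{k\in N(i)}(e_i-e_k)\rangle$ after relabelling. On the hypercube $\sum_{i\in N(k)}(e_k-e_i)=2e_k$. This gives
\begin{equation}
\nonumber
\Phi(\delta)\leq\frac{8}{Vn}\sum_{k}\langle\delta_k,e_k\rangle=\frac{8}{Vn}\sum_{k,j}\delta_{k,j}e_{k,j},
\end{equation}
which is $\leq0$ with equality only when $\delta=0$, because every coordinate is a boundary coordinate. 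This case is clean.

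\textbf{Base$3$, and the main obstacle.} The averaging trick only controls boundary coordinates. Taking weights $1/|N(k)|$ makes the contributions of the two opposite neighbours $e_k\pm\mathbf{1}_j$ cancel along each zero coordinate $j$, since those two neighbours have equal degree. It therefore yields $\Phi(\delta)\leq C\sum_{k,j:\,|e_{k,j}|=1}\delta_{k,j}e_{k,j}\leq0$, with equality possible whenever only interior coordinates move. The case $n=1$ shows what really happens: moving $0$ to $\varepsilon$ lowers the minima of $-1$, $0$ and $1$ by an amount of order $|\varepsilon|$. The decrease comes from $\min(x,-x)=-|x|$, not from the boundary.

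So for base$3$ I will refine the upper bound. For each $k$ and each zero coordinate $j$, I keep the exact minimum over the pair $e_k\pm\mathbf{1}_j$ instead of averaging. I then take a convex combination over coordinates $j$, using weights that depend only on the number of zero coordinates of the points involved, so that the relabelling symmetry survives. The resulting bound has the form $-\sum|\text{interior differences}|+\sum(\text{boundary terms})\leq0$.

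Finally I must check that it vanishes only at $\delta=0$, arguing by induction along lines of the grid. Zero first-order change would force $\delta_{k,j}$ to be constant along each coordinate line in direction $j$. Because every such line ends at boundary points where $\delta_{k,j}$ must vanish, this forces $\delta=0$. This combinatorial bookkeeping, namely choosing weights for which the boundary and interior contributions combine with a definite sign, is where I expect most of the work to lie.
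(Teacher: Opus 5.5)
Your proposal is correct, and it takes a genuinely different, stronger route than the paper. The paper reads ``a perturbation of a vector'' literally and moves only one entry $e\mapsto e'$. For $n\geq 2$ every FSQ embedding has at least two neighbours at the minimal distance, so no other nearest-neighbour distance can increase. For $e'$ itself the paper takes the coordinate carrying the largest perturbation $\gamma$ and the grid neighbour $e^{*}$ in that direction, giving $\Vert e'-e^{*}\Vert_2^2=4-4\gamma+\bar{o}(\gamma)$ (base2) or $1-2\gamma+\bar{o}(\gamma)$ (base3). You instead move all entries at once, linearize, bound each minimum by a convex combination and double-count over the grid. This shows that $E_{FSQ}$ is a strict local maximizer of $D$ in the full configuration space with a linear rate, and it contains the single-vector claim as a special case. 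The price is global bookkeeping. In base3 the paper handles interior coordinates trivially by moving toward $\mathrm{sign}(\varepsilon)$, since in the single-vector setting the target neighbour does not move; in your setting they force you to keep the exact pairwise minimum, as you correctly diagnosed. The step you left open closes with uniform weights $1/n$. On a line in direction $j$ with $j$-th perturbations $u_-\geq 0$, $u_0$, $u_+\leq 0$, put $a=u_0-u_-$ and $b=u_+-u_0$. The line then contributes $a+b+\min(a,b)=-\tfrac{3}{2}(|u_-|+|u_+|)-\tfrac{1}{2}|2u_0-u_--u_+|\leq-(|u_-|+|u_0|+|u_+|)$, hence $\Phi(\delta)\leq-\frac{2}{Vn}\Vert\delta\Vert_1$. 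This makes both the compactness argument and the induction along lines unnecessary, since each line has only three points.
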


Whereas this theorem (see Appendix~\ref{app:locally} for its proof) establishes local optimality of FSQ codebooks, Section~\ref{subsubsec:exps-toy} contains numerical experiments supporting the hypothesis about their global optimality.

\subsection{Prediction Accuracy}
\label{subsec:theory-accuracy}

In Section~\ref{subsec:theory-paths} we described properties of CDCD latent space that potentially can help this model train better. In this section, we assume that the CDCD model has already been trained till optimality and discuss how the latent space structure affects token prediction accuracy.

Denote token prior probabilities by $p_{k}$ for $k=1,..,V$. Recall that the probability distribution parameterized with the optimally trained CDCD equals $P_{0|t}(k|x)$ (\ref{eq:theta_star}). By~(\ref{eq:sol_x}) and Bayes formula we have 
\begin{equation}
\label{eq:acc-main}
P_{0|t}(k|x)\propto p_{k}\exp{\left(-\frac{1}{2\sigma_{t}^{2}}\Vert x-\alpha_{t}e_{k}\Vert_{2}^{2}\right)}.
\end{equation}
Denote the area of latent space where we predict token id $k$ at diffusion time $t$ by $\Omega^{X}_{k}(t)$, i.e.
\begin{equation}
\label{eq:omega_x}
\Omega^{X}_{k}(t)=\{x\in\mathbb{R}^{n}: k=\argmax_{j}{P_{0|t}(j|x)}\}.
\end{equation}
With this notation, we can introduce the \textit{average prediction accuracy} metrics $A(E,t)$ defined by
\begin{equation}
\label{eq:acc-metrics}
A(E,t)=\sum_{k=1}^{V}{P(X_{0}=e_{k},X_{t}\in\Omega^{X}_{k}(t))}.
\end{equation}
Our hypothesis is that for every $t\in[0,T]$ $A(E,t)$ is maximized in the class of all codebooks of size $V$ with entries bounded by $1$ in $L_{\infty}$ norm for FSQ codebooks $E_{FSQ}$ given that prior token probabilities are equal, i.e. $p_{k}=1/V$ for all $k=1,..,V$. 

We refer to this hypothesis as \textit{Best Accuracy Hypothesis} and for $1$-dimensional latent space we prove it in Appendix~\ref{app:hypo} while for higher dimensions we provide numerical evidence supporting it in the experiments described in Section~\ref{subsubsec:exps-toy}. If this hypothesis is true indeed, it means that well-trained CDCD models generating FSQ tokens better predict token probabilities in terms of top-1 accuracy. Although it does not necessarily imply better generation quality because, as one can see from the formulae~(\ref{eq:x0_estim}) and~(\ref{eq:score_theta}), sampling from CDCD involves computing probabilities of \textit{all} tokens rather than just the most probable one, it can still be considered a meaningful metrics since it has a clear interpretation.

In Appendix~\ref{app:hypo} we show that it is possible to consider stochastic process $Y_{t}=X_{t}/\alpha_{t}$ whose corresponding latent space areas $\Omega^{Y}_{k}(t)$ defined similarly to~(\ref{eq:omega_x}) do not actually depend on $t$ when all $p_{k}$ are equal. Figures~\ref{fig:fsq}(b) and~\ref{fig:fsq}(c) illustrate these areas for original base$3$ FSQ codebook and its slightly perturbed version. 

The main limitation of the argument in this section is that we consider the case with equal prior token probabilities which is not always true in practice. The opposite case requires different treatment and corrections compensating for the imbalance, and we leave it as future research direction.

\begin{figure*}[ht]
\begin{center}
\center{\includegraphics[width=1.0\linewidth]{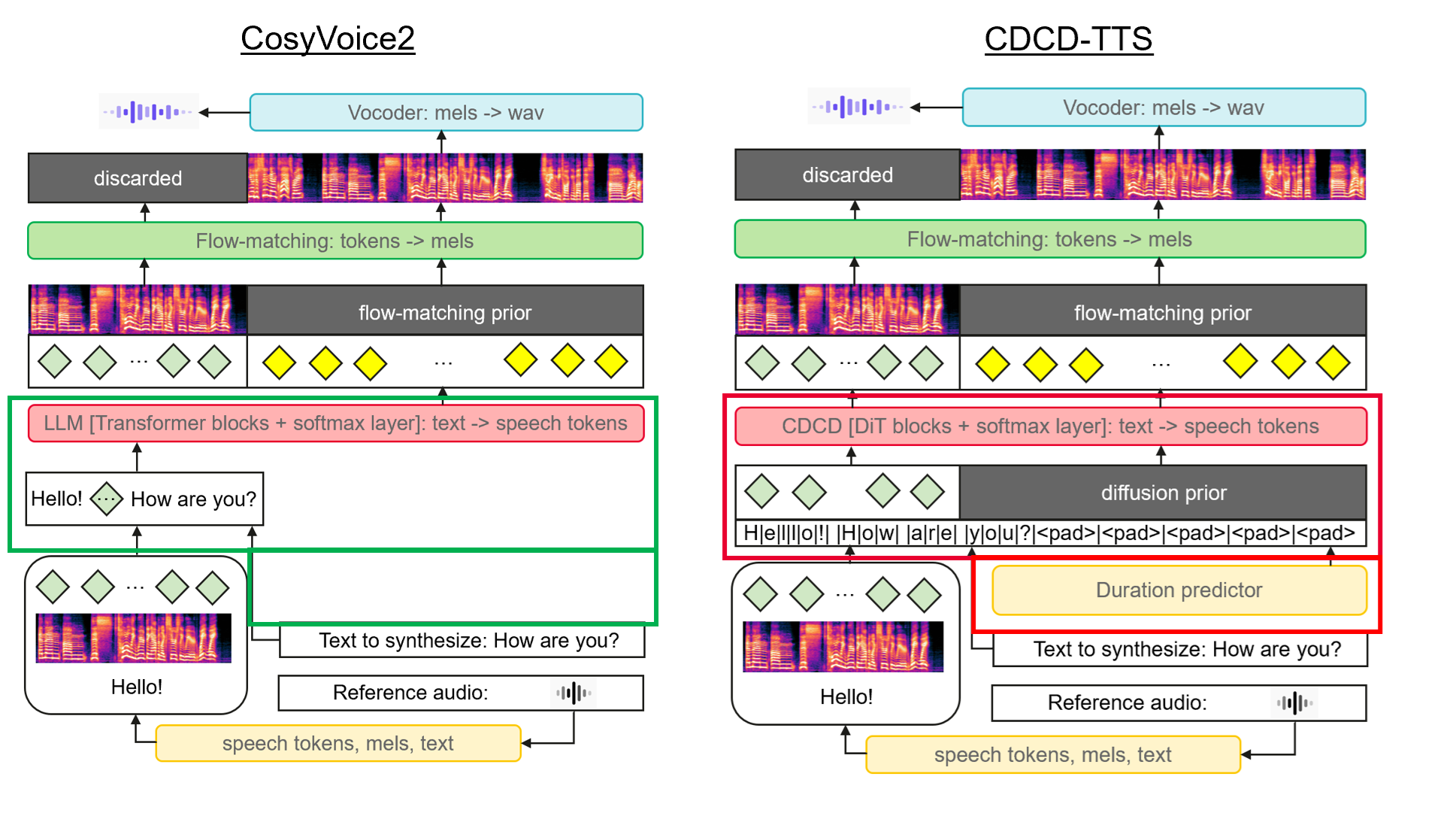}}
\end{center}
\caption{Comparison of CosyVoice2 (left) and CDCD-TTS (right). Green and red boxes show the difference between two models. Green diamonds stand for reference speech tokens, yellow -- for those generated by a model. The gray ``discarded'' box means that once we generated corresponding mel features, we no longer use them (they are just necessary to condition the flow-matching module during its inference).}
\label{fig:cdcd-tts}
\end{figure*}

\section{Experiments}
\label{sec:exps}
\subsection{Numerical Experiments}
\label{subsubsec:exps-toy}
First, we performed experiments devoted to global optimality of FSQ codebooks in terms of square root of average nearest neighbour distance. As follows from the structure of FSQ latent space, for base$2$ and base$3$ FSQ codebooks we have these metrics equal to $2$ and $1$ respectively. We randomly generated $10000$ codebooks with entries bounded in absolute value by $1$ in $L_{\infty}$ norm for base$2$ case for dimensionalities $2,3,4$ and $8$ and achieved the highest values of this metrics $1.993$, $1.984$, $1.941$ and $1.818$ which \textit{all} are less than $2$. For base$3$ case we obtained the values $0.969$, $0.913$, $0.871$ and $0.825$ for the same dimensionalities which \textit{all} are less than $1$. These experiments show that average nearest neighbour distance at least for \textit{most} of possible codebooks is less than that for FSQ codebooks which supports the hypothesis about the global optimality of FSQ codebooks whose local optimality in terms of average nearest neighbour distance is established in Theorem~\ref{thm:locally}. 

Next, we tried to verify Best Accuracy Hypothesis. We considered $1000$ different codebooks for base$2$ and base$3$ cases and dimensionalities $2,3$ and $4$. For each codebook, we performed Monte-Carlo sampling with around $10$M samples to estimate accuracy. For base$2$ case FSQ accuracies for the mentioned dimensionalities were $38.179\%, 23.580\%$ and $14.573\%$, while the best accuracies among $1000$ generated codebooks were $37.389\%, 22.816\%$ and $13.789\%$. For base$3$ case FSQ accuracies are $17.052\%, 7.044\%$ and $2.889\%$ with the best accuracies among generated codebooks being $16.659\%, 6.739\%$ and $2.731\%$. These numerical experiments support Best Accuracy Hypothesis for dimensionalities larger than one.

Finally, we conducted experiments on unconditional toy data generation to verify that the geometrical properties of FSQ method give benefits when we train CDCD models. We considered $2$-dimensional latent space and $4$-element sequences where each element belonged to the vocabulary of size $4$ (for base$2$ case) or $9$ (base$3$ case). We trained $5$ different CDCD models with $5$ different random seeds for each case for the latent space of FSQ method and for the latent space with slightly perturbed token embeddings till convergence and evaluated average of log-KL divergence between ground truth distribution (different sequences had different prior probabilities) and the distribution of the generated sequences (with $100$ denoising steps). We found that models trained with the original FSQ tokens gave better log-KL than those trained with the perturbed token embeddings both in base$2$ and base$3$ cases: $-9.55$ vs $-8.68$, and $-7.16$ vs $-6.72$ correspondingly. Neural networks considered in this experiment were a stack of $4$ layers of convolutions with kernel size $3$ and $256$ channels with GELU non-linearity and sinusoidal positional embedding to represent diffusion time $t$. For this toy example we verified that the latent space geometry of FSQ scheme gives advantage to CDCD models by itself; in the next section, we will check what happens if we incorporate CDCD operating on FSQ tokens into a more complicated architecture.

\subsection{Text-to-Speech Experiments}
\label{subsec:exps-tts}
We have derived results suggesting optimality of FSQ codebooks for CDCD models and verified several of them in numerical experiments on toy data, and in this section we apply CDCD for token generation in a real-life scenario.

We started from CosyVoice2 TTS model utilizing base$3$ FSQ speech tokens (see Section~\ref{subsec:background-cosyvoice2}) with $8$-dimensional embeddings (resulting in the total of $3^{8}=6561$ speech tokens sampled at $25$Hz) and essentially replaced its LLM-based text-to-token module with CDCD while keeping the remaining modules as well as FSQ speech tokenizer unchanged. Because of non-autoregressive nature of generation with CDCD model, we had to additionally introduce a statistical duration prediction module predicting a single number -- how many speech tokens should be generated given an input text. This module does not rely on neural networks and was chosen for its simplicity because in our preliminary experiments it did not lead to quality degradation compared to more complicated duration predictors, e.g. the one borrowed from F5-TTS based on a neural network. More details can be found in Appendix~\ref{app:architecture}.

We borrowed model architecture of CDCD from the corresponding F5-TTS \cite{f5-tts} module generating continuous acoustic features with continuous DPM, and just added softmax layer on top of DiT \cite{dit} to enable token ids prediction as required by the CDCD framework. We also used the same approach of conditioning CDCD on reference text and reference speech tokens -- FSQ embeddings of reference tokens were concatenated to the main neural network input, i.e. to noisy latent vectors, and, similarly, reference text was concatenated to the text TTS model had to generate, and right padding was added to fit speech tokens sequence length. Both CosyVoice2\footnote{\url{https://github.com/FunAudioLLM/CosyVoice}} (together with FSQ speech tokenizer) and F5-TTS\footnote{\url{https://github.com/SWivid/F5-TTS}} have official GitHub implementation that we extensively utilized while implementing our TTS model we called CDCD-TTS. Figure~\ref{fig:cdcd-tts} provides comparison of CosyVoice2 and CDCD-TTS, and Appendix~\ref{app:architecture} contains a detailed description of the architecture of the latter.

\begin{table}
\caption{Evaluation of TTS models on SEED \textit{test-en} set. F5-TTS and CosyVoice3 results in italics are given as a reference.}
\begin{center}
\begin{tabular}{ccccc}
\toprule
 &WER &SIM &MOS &EMO\\
\midrule
\textit{RVQ-25} &$21.3\%$ &$0.382$ &$2.932$ &$52.0\%$\\
\textit{FSQ-permute-25} &$15.4\%$ &$0.588$ &$3.631$ &$70.1\%$\\
\midrule
\textit{FSQ-original-5} &$2.39\%$ &$\mathbf{0.654}$ &$4.093$ &$71.7\%$\\ 
\textit{FSQ-perturb-5} &$3.10\%$ &$0.647$ &$3.834$ &$70.6\%$\\
\midrule
\textit{FSQ-original-8} &$2.10\%$ &$\mathbf{0.654}$ &$\mathbf{4.119}$ &$72.2\%$\\ 
\textit{FSQ-perturb-8} &$2.32\%$ &$0.649$ &$4.030$ &$71.8\%$\\
\midrule
\textit{FSQ-original-12} &$2.05\%$ &$\mathbf{0.653}$ &$\mathbf{4.120}$ &$72.3\%$\\ 
\textit{FSQ-perturb-12} &$2.14\%$ &$0.647$ &$4.088$ &$72.1\%$\\
\midrule
\textit{FSQ-original-25} &$\mathbf{2.00\%}$ &$\mathbf{0.653}$ &$\mathbf{4.119}$ &$\mathbf{72.7\%}$\\ 
\textit{FSQ-perturb-25} &$2.03\%$ &$0.648$ &$\mathbf{4.118}$ &$72.3\%$\\
\midrule
\textit{CosyVoice2\yrcite{cosyvoice2}} &$2.57\%$ &$\mathbf{0.652}$ &$4.077$ &$72.2\%$\\
\midrule
\textit{F5-TTS\yrcite{f5-tts}} &$\textit{1.83\%}$ &$\textit{0.665}$ &$\textit{3.754}$ &$\textit{71.4\%}$\\
\textit{CosyVoice3\yrcite{cosyvoice3}} &$\textit{1.68\%}$ &$\textit{0.695}$ &$\textit{3.937}$ &$\textit{72.7\%}$\\
\bottomrule
\end{tabular}
\end{center}
\label{tab:main}
\end{table}

We trained four CDCD-based TTS models on $65k$ hours of English speech data from LibriLight \cite{librilight}, GigaSpeech \cite{gigaspeech} and English subset of Emilia dataset \cite{emilia}. \textit{FSQ-original} stands for the original model we call CDCD-TTS trained to generate true FSQ speech tokens. \textit{FSQ-perturb} stands for the model where FSQ token embeddings were slightly perturbed ($\pm1$ coordinates remained unperturbed while each $0$ coordinate was randomly shifted either to $+0.5$ or to $-0.5$ so that the convex hull is still the same hypercube and the overall data scale remains unchanged). \textit{FSQ-permute} stands for the same token embeddings as in the original FSQ scheme but with permuted token ids. These three models had exactly the same architectures and were trained for $1.5$M steps with warmup with the maximum learning rate $10^{-4}$ and batch size of $600$ seconds on $8$ V100 GPUs. The last CDCD-based model denoted by \textit{RVQ} was trained to predict speech tokens coming from an alternative hierarchical quantization scheme called Residual Vector Quantization \cite{rvq} employed in EnCodec \cite{encodec} speech tokenizer having $8$ VQ layers. Since EnCodec vectors have higher dimensionality (namely, $128$), we had to reduce batch size to $300$ seconds and maximum learning rate to $5\cdot10^{-5}$. \textit{RVQ} model generates only the first RVQ layer of tokens while the remaining $7$ layers are generated by the non-autoregressive module borrowed from VALLE-X TTS model \cite{valle-x}.

By choosing to train \textit{FSQ-original} text-to-speech model, we consider the most popular VQ technique with the fixed codebook used for obtaining speech tokens. Note that another popular VQ technique LFQ \cite{magvit} is the same as base$2$ FSQ from the point of view of relative position of token embeddings. As for VQ techniques with codebooks changing during VQ training, we choose \textit{RVQ} as a baseline common for speech applications and \textit{FSQ-perturb} is meant to account for various VQ techinques with the single codebook (e.g. VQ-VAE) where trained embeddings may have almost arbitrary relative positions. For fair comparison between CDCD-based models, we used the same backbone architecture having approximately $45$M parameters with the only difference that we accounted for different input noisy embedding size ($8$ for FSQ vs $128$ for RVQ). Also, for the same purpose training conditions and hyperparameters were chosen as similar as possible.

At generation stage for CDCD models we use different number of denoising steps belonging to $\{5,8,12,25\}$ (this number is given as a postfix to the model names in Table~\ref{tab:main}) and employ DDIM solver \cite{ddim} to solve the ODE~(\ref{eq:ode_x}). We use classifier-free guidance related to text conditioning with weight $0.5$ and start generation from random samples belonging to $\mathcal{N}(0,(2/3)^{2}\ide)$ rather than $\mathcal{N}(0,\ide)$ as it gives significantly better results in terms of intelligibility and sound quality on our development set. All models are tested on SEED-TTS \cite{seed-tts} \textit{test-en} test set containing roughly $1000$ sentences. We synthesize each of them $4$ times with each model under comparison and report metrics aggregated by means of median function. Speech intelligibility is measured with Word Error Rate (WER) and speaker similarity with the target speaker (SIM) -- as cosine similarity between WavLM \cite{wavlm} speaker embeddings tuned for speaker verification task. These two metrics are measured with officially provided tool\footnote{\url{https://github.com/BytedanceSpeech/seed-tts-eval}}. As for the overall speech quality, we measure it with UTMOS\footnote{\url{https://huggingface.co/spaces/sarulab-speech/UTMOS-demo/tree/main}}, proxy to human $5$-point Mean Opinion Scores (MOS). Finally, we test how well TTS models copy emotions from reference speech by running \textit{emotion2vec} model\footnote{\url{https://huggingface.co/emotion2vec}} and checking whether emotion labels assigned by this model to reference and synthesized speech are the same. The corresponding EMO score is just the accuracy of these predictions. We compare our CDCD-based TTS models with their LLM-based counterpart CosyVoice2 and also add a popular contemporary model F5-TTS \cite{f5-tts} and one of the current state-of-the-art zero-shot TTS models CosyVoice3 \cite{cosyvoice3} to the comparison as references. The results of this comparison are given in Table~\ref{tab:main}.

This table demonstrates that \textit{FSQ-original} model consistently outperforms second best \textit{FSQ-perturb} model for any number of reverse diffusion steps, although the difference becomes marginal when the number of steps is large enough. It implicitly supports local optimality of FSQ codebooks for the practical case of imperfectly trained CDCD models with limited capacity (note that given optimally trained network and ``infinite'' inference denoising steps, for any reasonable latent space geometry we should have perfect data reconstruction). On the other hand, \textit{FSQ-permute} and \textit{RVQ} perform not very well even when rather large number of steps are used at inference, which suggests that FSQ tokenization may be globally optimal for the CDCD framework, but, at the same time, it is not sufficient just to randomly assign tokens to FSQ embeddings -- token ids and FSQ embeddings should be mapped to each other as a result of autoencoder training. Also, \textit{RVQ} model performs worse than all FSQ counterparts, and it can be perhaps explained by the fact that for the model with the same capacity it is difficult to account for embeddings with significantly larger embedding size ($128$ vs $8$).

Remarkably, our best CDCD-TTS model outperforms its LLM-based counterpart by a large margin in speech intelligibility and speech naturalness as demonstrated in Table~\ref{tab:main}. Moreover, its DiT backbone is $10$x smaller than $0.5$B LLM in CosyVoice2 and, depending on duration of reference speech utterance, is $5-10$x faster, allowing the whole CDCD-TTS algorithm to achieve real-time factor of around $0.2-0.3$, i.e. it synthesizes speech $3-5$x faster than real time on a GPU with only $25$ reverse diffusion steps.

\section{Conclusion}
\label{sec:outro}
In this paper we studied continuous diffusion for categorical data and the latent space properties allowing to train such models with good quality. We established the connection between reverse diffusion path measures and the distance between data samples generated along these trajectories, and used this connection to interpret a geometrical property of FSQ codebooks in terms of KL divergence between trajectories leading to token embeddings and their suitability for CDCD training. Also, we set up the hypothesis about average prediction accuracy being maximized for FSQ latent space in case of optimally trained diffusions, and provided some theoretical and empirical evidence for that. Finally, we trained the first CDCD-based text-to-speech model capable of efficient zero-shot voice cloning of very high quality. Its superior performance compared to LLM-based counterpart suggests that CDCD models are a promising research direction.

\section*{Acknowledgements}

The work of Vadim Popov and Tasnima Sadekova was supported by the grant for research centers in the field of AI provided by the Ministry of Economic Development of the Russian Federation in accordance with the agreement \textbf{000000C313925P4E0002} and the agreement with HSE University №139-15-2025-009.

\section*{Impact Statement}

This paper's topic is generative deep learning which has potential risks when used improperly or with bad intentions. Theoretical part of the paper does not seem to lead to any specific societal consequences. As for the text-to-speech model we introduce, it is capable of zero-shot voice cloning, thus it must be used with care to protect privacy as we will definitely mention when we open-source it.

\bibliography{example_paper}
\bibliographystyle{icml2026}

\newpage
\appendix
\onecolumn

\section{Proof of Lemma \ref{lem:diff_bridge}.}
\label{app:lemma}

\begin{proof}
Consider the forward diffusion process $X_{t}$ satisfying the SDE~(\ref{eq:fwd_x}):
\begin{equation}
\nonumber
dX_{t}=f(t)X_{t}dt + g(t)dW_{t}, \ \ t\in [0,T].
\end{equation}
Denote its conditional probability density function by $p_{t|s}(\cdot|\cdot)$. From the literature on diffusion generative modeling \cite{vdpm} we know that the generalized version of the formula (\ref{eq:sol_x}) describing conditional distribution with the density $p_{t|s}$ holds: 
\begin{equation}
\label{eq:x_cond}
\Law{(X_{t}|X_{s})=\mathcal{N}(\alpha_{t|s}X_{s},\sigma_{t|s}^{2}\ide)}, \ \ \alpha_{t|s}=\frac{\alpha_{t}}{\alpha_{s}}, \ \ 
\sigma^{2}_{t|s}=\sigma^{2}_{t}-\alpha^{2}_{t|s}\sigma^{2}_{s}, \ \ 0\leq s < t \leq T.
\end{equation}
In order to condition the forward diffusion $X_{t}$ on its endpoints to obtain the process $X_{t}^{a,b}$ such that $\Law{(X_{t}^{a,b})}=\Law{(X_{t}|X_{0}=a,X_{T}=b)}$, we can use Doob's $h$-transform:
\begin{equation}
\nonumber
h^{a,b}(x,t):=\frac{p_{T|t}(b|x)}{p_{T|0}(b|a)},
\end{equation}
and write down the following SDE \cite{doob-book,ddbm}:
\begin{equation}
\nonumber
dX_{t}^{a,b}=(f(t)X_{t}^{a,b}+g^{2}(t)\nabla_{x}\log{h^{a,b}(X_{t}^{a,b},t)})dt + g(t)dW_{t}, \ \ t\in[0,T].
\end{equation}
Formula (\ref{eq:x_cond}) implies that $\nabla_{x}\log{h^{a,b}(x,t)}=\nabla_{x}\log{p_{T|t}(b|x)}=\alpha_{T|t}(b-\alpha_{T|t}x)/\sigma^{2}_{T|t}$, thus $X_{t}^{a,b}$ satisfies the SDE
\begin{equation}
\label{eq:bridge-fwd}
dX_{t}^{a,b}=\left(\alpha_{T|t}\frac{g^{2}(t)}{\sigma^{2}_{T|t}}b+\left(f(t)-\alpha^{2}_{T|t}\frac{g^{2}(t)}{\sigma^{2}_{T|t}}\right)X_{t}^{a,b}\right)dt + g(t)dW_{t}, \ \ t\in[0,T],
\end{equation}
with the initial condition $X_{0}^{a,b}=a$.

Now let us find its time-reverse model by applying Anderson's theorem \yrcite{anderson}. To do this, we need to know marginal probability densities of the diffusion bridge process $X_{t}^{a,b}$. It is known that these densities are Gaussian \cite{vdpm}:
\begin{equation}
\nonumber
\Law{(X_{t}^{a,b})}=\Law{(X_{t}|X_{T}=b,X_{0}=a)}=\mathcal{N}\left(\alpha_{t}\frac{\sigma_{T|t}^{2}}{\sigma_{T}^{2}}a+\alpha_{T|t}\frac{\sigma_{t}^{2}}{\sigma^{2}_{T}}b,\frac{\sigma_{t}^{2}\sigma_{T|t}^{2}}{\sigma_{T}^{2}}\ide\right),
\end{equation}
so the score function corresponding to the diffusion bridge $X_{t}^{a,b}$ can be expressed as
\begin{equation}
\label{eq:bridge-score}
\nabla_{x}\log{p_{t}^{a,b}(x)}=-\frac{\sigma_{T}^{2}}{\sigma_{t}^{2}\sigma_{T|t}^{2}}\left(x-\alpha_{t}\frac{\sigma_{T|t}^{2}}{\sigma_{T}^{2}}a-\alpha_{T|t}\frac{\sigma_{t}^{2}}{\sigma^{2}_{T}}b\right)=-\frac{\sigma_{T}^{2}}{\sigma_{t}^{2}\sigma_{T|t}^{2}}x+\frac{\alpha_{t}}{\sigma_{t}^{2}}a+\frac{\alpha_{T|t}}{\sigma_{T|t}^{2}}b.
\end{equation}

Anderson's theorem implies that reverse-time model $\hat{X}_{t}^{a,b}$ of the diffusion bridge $X_{t}^{a,b}$ satisfies the SDE
\begin{equation}
\nonumber
d\hat{X}_{t}^{a,b}=\left(\alpha_{T|t}\frac{g^{2}(t)}{\sigma^{2}_{T|t}}b+\left(f(t)-\alpha^{2}_{T|t}\frac{g^{2}(t)}{\sigma^{2}_{T|t}}\right)\hat{X}_{t}^{a,b}-g^{2}(t)\nabla\log{p_{t}^{a,b}(\hat{X}_{t}^{a,b})}\right)dt + g(t)d\hat{W}_{t},
\end{equation}
where $\hat{W}_{t}$ is reverse-time Brownian motion and the initial condition is $\hat{X}_{T}^{a,b}=b$. Plugging in the expression for the score function from (\ref{eq:bridge-score}) into the above SDE and using the identity $\sigma_{T|t}^{2}=\sigma_{T}^{2}-\alpha_{T|t}^{2}\sigma_{t}^{2}$ following from (\ref{eq:x_cond}), we finally obtain the desired SDE:
\begin{equation}
\label{eq:app-sde}
  d\hat{X}_{t}^{a,b}=\left(\left(f(t)+\frac{g^{2}(t)}{\sigma_{t}^{2}}\right)\hat{X}_{t}^{a,b}-g^{2}(t)\frac{\alpha_{t}}{\sigma^{2}_{t}}a\right)dt+g(t)d\hat{W}_{t}, \ \ t\in[0,T],
\end{equation}
with the initial condition $\hat{X}_{T}^{a,b}=b$. This SDE is to be solved backwards in time.
\end{proof}

\section{Proof of Statement \ref{thm:theorem}.}
\label{app:theorem}
\begin{proof}
Consider two different reverse diffusion bridges $\hat{X}_{t}^{a,b}$ for $a=a_{1}$ and $a=a_{2}$ with time $t$ running backwards from $t=T$ to $t=0$. Define random processes $Z^{i}_{t}:=\hat{X}_{T-t}^{a_{i},b}$ for $i=1,2$. They are now forward-time stochastic processes satisfying SDEs:
\begin{equation}
  dZ_{t}^{i}=-\left(\left(f(T-t)+\frac{g^{2}(T-t)}{\sigma_{T-t}^{2}}\right)Z_{t}^{i}-g^{2}(T-t)\frac{\alpha_{T-t}}{\sigma^{2}_{T-t}}a_{i}\right)dt+g(T-t)dW_{t},
\end{equation}
where $i=1,2$, $W_{t}$ is a forward-time Brownian motion, and time $t$ increases from $t=0$ to $t=T$. Diffusions $Z_{t}^{i}$ are two processes with the same diffusion coefficient and starting at the same point $Z_{0}^{i}=X_{T}^{a_{i},b}=b$. It suggests that we can now apply Girsanov theorem.

Consider process $\widetilde{W}_{t}$ such that
\begin{equation}
\nonumber
d\widetilde{W}_{t}=dW_{t}+g(T-t)\frac{\alpha_{T-t}}{\sigma_{T-t}^{2}}(a_{1}-a_{2})dt.
\end{equation}
Denote probability measure corresponding to the probability space we consider by $\mathbb{P}$. In particular, $\{W_{t}\}_{t\in[0,T]}$ is a Brownian motion with respect to $\mathbb{P}$. Let $\tau$ be any positive number in $(0,T]$. The Girsanov theorem I from \cite{oksendal} implies that, if Novikov's condition is satisfied, i.e. if
\begin{equation}
\nonumber
\mathbb{E}\left[\exp\left(\Vert a_{1}-a_{2}\Vert_{2}^{2}\cdot\frac{1}{2}\int_{0}^{T-\tau}g^{2}(T-t)\frac{\alpha_{T-t}^{2}}{\sigma_{T-t}^{4}}dt\right)\right]<\infty \iff \frac{1}{2}\int_{\tau}^{T}{g^{2}(t)\frac{\alpha_{t}^{2}}{\sigma_{t}^{4}}dt} < \infty,
\end{equation}
then $\{\widetilde{W}_{t}\}_{t\in[0,T-\tau]}$ is a Brownian motion with respect to probability measure $\mathbb{Q}$ defined by its Radon-Nikodym derivative
\begin{equation}
\label{eq:radon-nikodym}
\frac{d\mathbb{Q}}{d\mathbb{P}}=\exp{\left(-\int_{0}^{T-\tau}{g(T-t)\frac{\alpha_{T-t}}{\sigma_{T-t}^{2}}}(a_{1}-a_{2})dW_{t}-\frac{1}{2}\int_{0}^{T-\tau}\Vert a_{1}-a_{2}\Vert_{2}^{2}g^{2}(T-t)\frac{\alpha_{T-t}^{2}}{\sigma_{T-t}^{4}}dt\right)}.
\end{equation}
We can rewrite the SDE for the process $Z^{1}_{t}$ in the following way:
\begin{equation}
\nonumber
  dZ_{t}^{1}=-\left(\left(f(T-t)+\frac{g^{2}(T-t)}{\sigma_{T-t}^{2}}\right)Z_{t}^{1}-g^{2}(T-t)\frac{\alpha_{T-t}}{\sigma^{2}_{T-t}}a_{2}\right)dt+g(T-t)d\widetilde{W}_{t}.
\end{equation}
Comparing this SDE to the one for $Z^{2}_{t}$
\begin{equation}
\nonumber
  dZ_{t}^{2}=-\left(\left(f(T-t)+\frac{g^{2}(T-t)}{\sigma_{T-t}^{2}}\right)Z_{t}^{2}-g^{2}(T-t)\frac{\alpha_{T-t}}{\sigma^{2}_{T-t}}a_{2}\right)dt+g(T-t)d{W}_{t}.
\end{equation}
we can deduce that the process $\{Z_{t}^{1}\}_{t\in[0,T-\tau]}$ has the same distribution under the probability measure $\mathbb{Q}$ as the process $\{Z_{t}^{2}\}_{t\in[0,T-\tau]}$ under the probability measure $\mathbb{P}$.

Now consider diffusion path measures defined by distributions of the trajectories $\{\hat{X}_{t}^{a_{1},b}\}_{t\in[\tau,T]}$ and $\{\hat{X}_{t}^{a_{2},b}\}_{t\in[\tau,T]}$. Denote them $\mu_{1}$ and $\mu_{2}$ respectively. For any measurable set $C$ of continuous trajectories in $\mathbb{R}^{n}\times[\tau,T]$ we have
\begin{equation}
\nonumber
\mu_{1}(C)=\mathbb{P}(\{\hat{X}_{t}^{a_{1},b}\}_{t\in[\tau,T]}\in C)=\mathbb{P}(\{Z_{t}^{1}\}_{t\in[0,T-\tau]}\in C)
\end{equation}
\begin{equation}
\nonumber
\mu_{2}(C)=\mathbb{P}(\{\hat{X}_{t}^{a_{2},b}\}_{t\in[\tau,T]}\in C)=\mathbb{P}(\{Z_{t}^{2}\}_{t\in[0,T-\tau]}\in C)=\mathbb{Q}(\{Z_{t}^{1}\}_{t\in[0,T-\tau]}\in C)
\end{equation}
Compute KL divergence between $\mu_{1}$ and $\mu_{2}$ taking into account that It\^{o}'s integral of a deterministic square-integrable function is a normal random variable with zero mean:
\begin{equation}
\begin{split}
\nonumber
KL(\mu_{1}||\mu_{2})&=\mathbb{E}_{\mu_{1}}\left[-\log{\frac{d\mu_{2}}{d\mu_{1}}}\right]=\mathbb{E}\left[-\log{\frac{d\mathbb{Q}}{d\mathbb{P}}}\right]=\mathbb{E}\left[\int_{0}^{T-\tau}{g(T-t)\frac{\alpha_{T-t}}{\sigma_{T-t}^{2}}(a_{1}-a_{2})dW_{t}}\right] \\
& +\frac{1}{2}\mathbb{E}\left[\int_{0}^{T-\tau}{\Vert a_{1}-a_{2}\Vert_{2}^{2}g^{2}(T-t)\frac{\alpha_{T-t}^{2}}{\sigma_{T-t}^{4}}dt}\right]=\Vert a_{1}-a_{2}\Vert_{2}^{2}\cdot\frac{1}{2}\int_{\tau}^{T}{g^{2}(t)\frac{\alpha_{t}^{2}}{\sigma_{t}^{4}}dt}.
\end{split}
\end{equation}
\end{proof}

\section{Proof of Theorem \ref{thm:locally}}
\label{app:locally}
\begin{proof}
Consider the first base$2$ case. Suppose we modified the token embedding $e$ having $m$ coordinates equal to $-1$ and $p$ coordinates $+1$, where $p+m$ is the latent space dimensionality and $p\geq 0, m\geq 0$. Suppose the modified embedding $e'$ has coordinates $-1+\delta_{i}$ and $+1-\beta_{j}$ instead of $-1$ and $+1$ for $i=1,..,p$ and $j=1,..,m$. Perturbation values $\delta_{i}$ and $\beta_{j}$ are non-negative because we need to have $\Vert e'\Vert_{\infty}\leq 1$. Denote the largest absolute value of per-coordinate perturbations by $\gamma$, i.e. $\gamma:=\max{(\max_{i=1,..,p}{|\delta_{i}|}, \max_{j=1,..,m}{|\beta_{j}|})}$. The overall perturbation $\Delta=\sum_{i=1}^{p}\delta_{i}^{2}+\sum_{j=1}^{m}\beta_{j}^{2}$ is assumed to be sufficiently small, i.e. $\Delta=\bar{o}(\gamma)$. Perturbing a single codebook entry $e$ by modifying it to $e'$ could not increase nearest neighbour distance for any other embedding, since they all have at least two neighbours on the distance $2$ (except for the trivial $1$-dimensional case for which the statement of the theorem is obvious) which is the nearest neighbour distance for all token embeddings in base$2$ FSQ scheme. Now let us prove that the nearest neighbour distance decreases for the modified $e'$. 

Suppose that the largest perturbation with magnitude $\gamma$ corresponds to $-1$ coordinate modified to $-1+\gamma$. Consider embedding $e^{*}$ differing from $e$ only in this coordinate (having $+1$ there instead of $-1$). Squared distance between $e'$ and $e^{*}$ is then calculated as $\Vert e'-e^{*}\Vert_{2}^{2}=\Delta-\gamma^{2}+(-1+\gamma - 1)^{2}=4-4\gamma+\bar{o}(\gamma)$ which is less than nearest neighbour squared distance $4$ for the unmodified embedding $e$. To conclude the proof for base$2$ FSQ codebook, we need to consider the case when the largest perturbation with magnitude $\gamma$ corresponded to $+1$ coordinate. In this case consider $e^{*}$ differing from $e$ only in this coordinate and compute the squared distance  $\Vert e'-e^{*}\Vert_{2}^{2}=\Delta - \gamma^{2}+(1-\gamma+1)^{2}=4-4\gamma+\bar{o}(\gamma)$ which is again less than $4$.

Let us now consider the second base$3$ case. Suppose we modified the token embedding $e$ having $m$ coordinates equal to $-1$, $p$ coordinates $+1$, and $n$ zero coordinates, where $p+m+n$ is the latent space dimensionality and $p\geq 0, m\geq 0, n\geq 0$. Suppose the modified embedding $e'$ has coordinates $-1+\delta_{i}$, $+1-\beta_{j}$ and $\varepsilon_{k}$ instead of $-1$, $+1$ and $0$ for $i=1,..,p$, $j=1,..,m$ and $k=1,..,n$. Perturbation values $\delta_{i}$ and $\beta_{j}$ are non-negative because we need to have $\Vert e'\Vert_{\infty}\leq 1$, while $\varepsilon_{k}$ can be either positive or negative. Denote the largest absolute value of per-coordinate perturbations by $\gamma$, i.e. $\gamma:=\max{(\max_{i=1,..,p}{|\delta_{i}|}, \max_{j=1,..,m}{|\beta_{j}|},\max_{k=1,..,n}{|\varepsilon_{k}|})}$. The overall perturbation $\Delta=\sum_{i=1}^{p}\delta_{i}^{2}+\sum_{j=1}^{m}\beta_{j}^{2}+\sum_{k=1}^{n}\varepsilon_{k}^{2}$ is assumed to be sufficiently small, i.e. $\Delta=\bar{o}(\gamma)$. Perturbing a single codebook entry $e$ by modifying it to $e'$ could not increase nearest neighbour distance for any other embedding, since they all have at least two neighbours on the distance $1$ (except for the trivial $1$-dimensional case for which the statement of the theorem is obvious) which is the nearest neighbour distance for all token embeddings in base$3$ FSQ scheme. Now let us prove that the nearest neighbour distance decreases for the modified $e'$.

Suppose that the largest perturbation with magnitude $\gamma$ corresponds to $-1$ coordinate modified to $-1+\gamma$. Consider embedding $e^{*}$ differing from $e$ only in this coordinate (having $0$ there instead of $-1$). Squared distance between $e'$ and $e^{*}$ is then calculated as $\Vert e'-e^{*}\Vert_{2}^{2}=\Delta-\gamma^{2}+(-1+\gamma)^{2}=1-2\gamma+\bar{o}(\gamma)$ which is less than nearest neighbour squared distance $1$ for the unmodified embedding $e$. Consider the case when the largest perturbation with magnitude $\gamma$ corresponded to $+1$ coordinate. In this case consider $e^{*}$ differing from $e$ only in this coordinate (having $0$ there instead of $+1$) and compute the squared distance  $\Vert e'-e^{*}\Vert_{2}^{2}=\Delta - \gamma^{2}+(1-\gamma)^{2}=1-2\gamma+\bar{o}(\gamma)$ which is again less than $1$. Consider the final case when $\gamma$ corresponded to $0$ coordinate modified to $\varepsilon$ with $\gamma=|\varepsilon|$. Take embedding $e^{*}$ differing from $e$ only in this coordinate and having $sign(\varepsilon)$ there instead of $0$. We can compute squared distance $\Vert e'-e^{*}\Vert_{2}^{2}=\Delta-\gamma^{2}+(\varepsilon-sign(\varepsilon))^{2}=1-2\gamma+\bar{o}(\gamma)$ which concludes the proof.
\end{proof}

\section{Proof of Best Accuracy Hypothesis in One-dimensional Case}
\label{app:hypo}
Consider change of variables $Y_{t}$=$X_{t}/\alpha_{t}$. We know that $\alpha_{0}=1$, so $Y_{0}=X_{0}$. It\^{o}'s formula implies that
$$dY_{t}=-\frac{\dot{\alpha}_{t}}{\alpha_{t}^{2}}X_{t}dt+\frac{1}{\alpha_{t}}dX_{t}=\left(-\frac{\dot{\alpha}_{t}}{\alpha_{t}^{2}}+\frac{f(t)}{\alpha_{t}}\right)X_{t}dt+\frac{g(t)}{\alpha_{t}}dW_{t}=\frac{g(t)}{\alpha_{t}}dW_{t}, \ \ t\in[0,T],$$
where the drift term cancels out by the formula~(\ref{eq:alpha_sigma_t}). Since the SDE for $Y_{t}$ does not have drift term, its mean always stays the same, and the formula~(\ref{eq:sol_x}) implies that
\begin{equation}\
\label{eq:y}
Y_{t}=Y_{0}+\frac{\sigma_{t}}{\alpha_{t}}\mathcal{N}(0,\ide), \ \ t\in[0,T].
\end{equation}
It allows us to write transitional densities of the forward processes for $X_{t}$ and $Y_{t}$ respectively:
$$p_{t|0}(x|k)=\frac{1}{(2\pi\sigma_{t}^{2})^{n/2}}\exp{\left(-\frac{1}{2\sigma_{t}^{2}}\Vert x-\alpha_{t}e_{k}\Vert_{2}^{2}\right)},$$
$$p_{t|0}(y|k)=\frac{\alpha_{t}^{n}}{(2\pi\sigma_{t}^{2})^{n/2}}\exp{\left(-\frac{\alpha_{t}^{2}}{2\sigma_{t}^{2}}\Vert y-e_{k}\Vert_{2}^{2}\right)}.$$
Bayes formula implies that
$$P_{0|t}(k|x)=p_{k}\exp{\left(-\frac{1}{2\sigma_{t}^{2}}\Vert x-\alpha_{t}e_{k}\Vert_{2}^{2}\right)} / \sum_{j=1}^{V}p_{j}\exp{\left(-\frac{1}{2\sigma_{t}^{2}}\Vert x-\alpha_{t}e_{j}\Vert_{2}^{2}\right)},$$
$$P_{0|t}(k|y)=p_{k}\exp{\left(-\frac{\alpha_{t}^{2}}{2\sigma_{t}^{2}}\Vert y-e_{k}\Vert_{2}^{2}\right)} / \sum_{j=1}^{V}p_{j}\exp{\left(-\frac{\alpha^{2}_{t}}{2\sigma_{t}^{2}}\Vert y-e_{j}\Vert_{2}^{2}\right)}.$$
Denote
$$\Omega^{X}_{k}(t):=\{x\in\mathbb{R}^{n}:k=\argmax_{j}{P_{0|t}(j|x)}\}=\{x\in\mathbb{R}^{n}:k=\argmax_{j}{p_{j}\cdot\exp{\left(-\frac{1}{2\sigma_{t}^{2}}\Vert x-\alpha_{t}e_{j}\Vert_{2}^{2}\right)}}\},$$
$$\Omega^{Y}_{k}(t):=\{y\in\mathbb{R}^{n}:k=\argmax_{j}{P_{0|t}(j|y)}\}=\{y\in\mathbb{R}^{n}:k=\argmax_{j}{p_{j}\cdot\exp{\left(-\frac{\alpha_{t}^{2}}{2\sigma_{t}^{2}}\Vert y-e_{j}\Vert_{2}^{2}\right)}}\}.$$
We consider the case $p_{k}=1/V$ for all $k=1,...,V$, so the above definitions simplify:
$$\Omega^{X}_{k}(t)=\{x\in\mathbb{R}^{n}:k=\argmin_{j}{\Vert x-\alpha_{t}e_{j}\Vert_{2}^{2}}\},$$
$$\Omega^{Y}_{k}=\{y\in\mathbb{R}^{n}:k=\argmin_{j}{\Vert y-e_{j}\Vert_{2}^{2}}\}.$$
Note that, in contrast with the process $X_{t}$, the latent space area $\Omega^{Y}_{k}$ does not depend on diffusion time $t$ when we consider the process $Y_{t}$, this is why we prefer to manipulate with $Y_{t}$ in what follows.

Rewrite the average prediction accuracy $A(E,t)$ in terms of the process $Y_{t}$:
$$A(E,t)=\sum_{k=1}^{V}{P(X_{0}=e_{k},X_{t}\in{}\Omega^{X}_{k}(t))}=\sum_{k=1}^{V}{P(Y_{0}=e_{k},Y_{t}\in\Omega^{Y}_{k})}=\sum_{k=1}^{V}{p_{k}P(Y_{t}\in\Omega^{Y}_{k}|Y_{0}=e_{k})},$$
or, taking into account that $p_k$ are all equal,
\begin{equation}
\label{eq:acc-y}
A(E,t)=\frac{1}{V}\sum_{k=1}^{V}P(k=\argmin_{j}{\Vert Y_{t}-e_{j}\Vert_{2}^{2}}|Y_{0}=e_{k}).
\end{equation}

Now we will use the above formula~(\ref{eq:acc-y}) to prove the hypothesis when dimension $n=1$.
Let us first prove that if the leftmost point $e_{left}\in\mathbb{R}$ is greater than $-1$, or the rightmost point $e_{right}\in\mathbb{R}$ is less than $+1$, then moving it to its respective edge will increase $A(E,t)$. We will prove only the first case, since the second one is analogous.

Consider the leftmost point $e_{left}>-1$ and its right neighbour $e\geq e_{left}$. Let them correspond to token ids $1$ and $2$ respectively. It is obvious that if we move $e_{left}$ to $-1$, it will affect only probabilities of predicting token ids $1$ and $2$: we now predict token id $1$ when $Y_{t}<(-1+e)/2$ (we did it for $Y_{t}<(e_{left}+e)/2$ before), and we predict token $2$ when $Y_{t}\in [(-1+e)/2,a)$ while we did it for $Y_{t}\in [(e_{left}+e)/2,a)$ before ($a$ is either $+\infty$ when we consider $V=2$, or an average of $e$ and its right neighbour when $V=3$). As for accuracy of predicting token id $2$, it increases because $P(Y_{t}\in [(-1+e)/2,a)|Y_{0}=e)>P(Y_{t}\in [(e_{left}+e)/2,a)|Y_{0}=e)$ since $e_{left}>-1$. As far as accuracy of predicting token id $1$ is concerned, it also increases because 
\begin{equation}
\begin{split}
\nonumber
& P(Y_{t}<(-1+e)/2|Y_{0}=-1)=P\left(\mathcal{N}\left(-1,\frac{\sigma_{t}^{2}}{\alpha_{t}^{2}}\right)<(-1+e)/2\right)=P\left(\mathcal{N}\left(0,\frac{\sigma_{t}^{2}}{\alpha_{t}^{2}}\right)<(1+e)/2\right) > \\
& > P\left(\mathcal{N}\left(0,\frac{\sigma_{t}^{2}}{\alpha_{t}^{2}}\right)<(-e_{left}+e)/2\right)=P\left(\mathcal{N}\left(e_{left},\frac{\sigma_{t}^{2}}{\alpha_{t}^{2}}\right)<(e_{left}+e)/2\right)=P(Y_{t}<(e_{left}+e)/2|Y_{0}=e_{left}).
\end{split}
\end{equation}
We used conditional distribution~(\ref{eq:y}) of $Y_{t}|Y_{0}$ and the fact that $e_{left}>-1$ to derive the above inequality.

Thus, we have proven that the leftmost and the rightmost points should be $-1$ and $+1$ respectively. It concludes the proof for base$2$ FSQ codebook. To finish the proof of the hypothesis in base$3$ case, suppose that the middle point equals $\alpha$ and maximize $A(E(\alpha),t)$ for $E=\{-1,\alpha,+1\}$. 

Denote cumulative distribution function of standard normal random variable $\xi$ by $\Phi(x)$ and its probability density function by $\varphi(x)$. Calculate each term in the expression~(\ref{eq:acc-y}) for average prediction accuracy:
\begin{equation}
\nonumber
\begin{split}
3\cdot A(E(\alpha),t)&=P\left(-1+\frac{\sigma_{t}}{\alpha_{t}}\xi\leq\frac{-1+\alpha}{2}\right)+P\left(\frac{-1+\alpha}{2}\leq\alpha+\frac{\sigma_{t}}{\alpha_{t}}\xi\leq\frac{1+\alpha}{2}\right)+P\left(1+\frac{\sigma_{t}}{\alpha_{t}}\xi\geq\frac{1+\alpha}{2}\right) = \\
&=P\left(\xi\leq\frac{\alpha_{t}}{\sigma_{t}}\frac{1+\alpha}{2}\right)+P\left(-\frac{\alpha_{t}}{\sigma_{t}}\frac{1+\alpha}{2}\leq\xi\leq\frac{\alpha_{t}}{\sigma_{t}}\frac{1-\alpha}{2}\right)+P\left(\xi\geq-\frac{\alpha_{t}}{\sigma_{t}}\frac{1-\alpha}{2}\right)=\\
&=\Phi\left(\frac{\alpha_{t}}{\sigma_{t}}\frac{1+\alpha}{2}\right)+\Phi\left(\frac{\alpha_{t}}{\sigma_{t}}\frac{1-\alpha}{2}\right)-\Phi\left(-\frac{\alpha_{t}}{\sigma_{t}}\frac{1+\alpha}{2}\right)+1-\Phi\left(-\frac{\alpha_{t}}{\sigma_{t}}\frac{1-\alpha}{2}\right)=\\
&=2\cdot\left(\Phi\left(\frac{\alpha_{t}}{\sigma_{t}}\frac{1+\alpha}{2}\right)+\Phi\left(\frac{\alpha_{t}}{\sigma_{t}}\frac{1-\alpha}{2}\right)\right)-1
\end{split}
\end{equation}
In the above derivation we used symmetry of standard normal distribution, namely the fact that $\Phi(x)+\Phi(-x)=1$.
Now compute the first and second derivative of $A(E(\alpha),t)$ with respect to $\alpha$:
$$A'(E(\alpha),t)=\frac{1}{3}\frac{\alpha_{t}}{\sigma_{t}}\left(\varphi\left(\frac{\alpha_{t}}{\sigma_{t}}\frac{1+\alpha}{2}\right)-\varphi\left(\frac{\alpha_{t}}{\sigma_{t}}\frac{1-\alpha}{2}\right)\right)$$
$$A''(E(\alpha),t)=\frac{1}{6}\frac{\alpha_{t}^{2}}{\sigma_{t}^{2}}\left(\varphi'\left(\frac{\alpha_{t}}{\sigma_{t}}\frac{1+\alpha}{2}\right)+\varphi'\left(\frac{\alpha_{t}}{\sigma_{t}}\frac{1-\alpha}{2}\right)\right)$$
Since we search for $\alpha\in[-1,1]$, then the only solution to the equation $A'(E(\alpha),t)=0$ is $\alpha=0$, and it is easy to see that for $\alpha=0$ $A''(E(\alpha),t)<0$ because both $\alpha_{t}$ and $\sigma_{t}$ are positive. So, average prediction accuracy $A(E,t)$ is maximized when $E=E_{FSQ}=\{-1,0,+1\}$ for all $t$.

\section{Architecture of CDCD-TTS}
\label{app:architecture}

CDCD-TTS model differs from CosyVoice2 only by a module generating speech tokens from text and a duration predictor. Below one can find a detailed description of these modules.

Duration predictor in CDCD-TTS estimates the number of speech tokens corresponding to input text from statistics collected from sufficiently large annotated speech corpus. A simple model assigning each text character its average duration expressed in the number of speech tokens is fit to this corpus by means of minimizing MSE loss. At CDCD-TTS inference, this statistical model is first applied to reference text, and the scalar $\kappa$ is calculated as the actual number of tokens corresponding to the reference audio divided by the number of speech tokens predicted by the statistical model. The value of $\kappa$ shows the relative speed of the reference speech. The number of speech tokens corresponding to the input text is then computed as the number of tokens predicted by the statistical model from the input text multiplied by $\kappa$.

CDCD backbone borrows its architecture from F5-TTS. In contrast with CosyVoice2 model utilizing BPE-based text tokenizer, CDCD-TTS model recognizes input English text just as a sequence of latin characters, digits and punctuation marks. The input text conditioning, i.e. reference and input texts padded to the same length as the sum of the number of reference tokens and the number of tokens predicted by the duration predictor, is passed through $4$ ConvNeXt v2 blocks with hidden dimension $256$. The output is concatenated with the reference clean latent vectors and input noisy latent vectors to form input to a stack of $8$ DiT (adaLN-zero) blocks each having $8$ attention heads with the inner dimension $512$ and RoPE embeddings. DiT blocks are followed by the softmax layer predicting probabilities of $6561$ classes, i.e. FSQ speech tokens. Diffusion time $t$ is encoded with sinusoidal positional embedding and DiT context length is $1024$, i.e. the number of reference and generated speech tokens should be no more than $1024$, or approximately $41$ seconds.

\end{document}